\documentclass[11pt]{article}

\usepackage[preprint]{acl}
\usepackage{times}
\usepackage{latexsym}
\usepackage[utf8]{inputenc}
\usepackage[T1]{fontenc}
\usepackage{hyperref}
\usepackage{url}
\usepackage{booktabs}
\usepackage{amsfonts}
\usepackage{amsmath}
\usepackage{amssymb}
\usepackage{amsthm}
\usepackage{nicefrac}
\usepackage{xcolor}
\usepackage{graphicx}
\usepackage{algorithm}
\usepackage{algpseudocode}
\usepackage{multirow}
\usepackage{tabularx}
\usepackage{tikz}
\usepackage{pgfplots}
\pgfplotsset{compat=1.17}
\usetikzlibrary{shapes.geometric, arrows.meta, positioning, calc}

\theoremstyle{plain}
\newtheorem{theorem}{Theorem}

\newtheorem{corollary}[theorem]{Corollary}

\theoremstyle{definition}
\newtheorem{definition}[theorem]{Definition}

\newcommand{\R}{\mathbb{R}}

\title{EST-PRM: Stress-Testing Process Reward Models Before They Become Load-Bearing}

\author{
\textbf{Ibne Farabi Shihab}\textsuperscript{1}
\and
\textbf{Fariya Afrin}\textsuperscript{2}
\and
\textbf{Sanjeda Akter}\textsuperscript{1}
\and
\textbf{Anuj Sharma}\textsuperscript{3}
\\[2pt]
\textsuperscript{1}Department of Computer Science, Iowa State University \\
\textsuperscript{2}Department of Computer Science, Kalinga Institute of Industrial Technology \\
\textsuperscript{3}Department of Civil, Construction \& Environmental Engineering, Iowa State University \\
\texttt{ishihab@iastate.edu}
}

\begin{document}

\maketitle

\begin{abstract}
Process reward models (PRMs) are widely used in language-model training with dense step-level supervision. They assume PRM scores are stable proxies for step correctness under label-preserving transformations. These transformations change reasoning structure but preserve final answers. We argue this assumption is not well validated. Such transformations can change how PRM scores relate to correctness signals, leading to different failure modes across models.To address this gap, we introduce \textbf{EST-PRM}, a stress-testing framework for dense process rewards. It applies three transformations: (1) step inflation, (2) dependency-aware step reordering, and (3) confidence markers. A vulnerability decomposition is defined that separates reward inflation from loss of correctness sensitivity. Five PRM-style models are evaluated on 4,687 reasoning chains from MATH-500, GSM8K, and PRMBench.The results indicate clear differences in vulnerability patterns across models. Math-Shepherd shows the strongest sensitivity to position perturbations, with a Pearson correlation drop of $0.152 \pm 0.038$ and a $32.8 \pm 4.9\%$ score inflation rate. Qwen2.5-Math-PRM is most affected by step inflation, reaching a $47.6 \pm 4.3\%$ inflation rate. Confidence-based perturbations also distort reward calibration, revealing inconsistencies in correctness estimation. Three mitigation strategies are evaluated, highlighting trade-offs between robustness coverage and false-positive rates.
\end{abstract}

\section{Introduction}
\label{sec:intro}
A quiet methodological convergence is reshaping reinforcement learning for language models. Across a growing set of recent systems, including SDPO~\citep{hubotter2025sdpo}, G-OPD~\citep{yang2025gopd}, REOPOLD~\citep{ko2026reopold}, Nemotron-Cascade~2~\citep{yang2025nemotron_cascade2}, OpenClaw-RL~\citep{wang2025openclawrl}, RLSD~\citep{yang2026rlsd}, and SRPO~\citep{li2026srpo}, researchers increasingly combine dense feedback, self-distillation, sample routing, and reward-based filtering to transform sparse correctness signals into stronger training and selection signals. This paradigm has yielded state-of-the-art performance in challenging reasoning settings. For instance, Nemotron-Cascade~2’s 30B mixture-of-experts model~\citep{fedus2022switch,jiang2024mixtral} achieves gold-medal performance on the International Mathematical Olympiad and the International Olympiad in Informatics while activating only 3B parameters, and REOPOLD shows that a 7B student can match a 32B teacher while achieving $3.3\times$ faster inference. Despite differences in implementation, these systems rely on evaluators that assign credit at the level of intermediate reasoning steps.

Process reward models (PRMs) constitute one such class of evaluators. PRMs assign scores to intermediate reasoning steps rather than only to final outcomes. This design enables dense credit assignment over extended reasoning trajectories and makes PRMs effective for guiding search and optimization in reasoning models. However, their practical utility depends on a stronger assumption than agreement with ground-truth labels on naturally occurring traces: PRM scores must remain aligned with correctness under distributional shifts induced by policy optimization over high-reward trajectories. Under such selection pressure, any systematic deviation between PRM scores and true correctness becomes exploitable and introduces bias into both training and inference-time selection.

We argue that standard evaluations of PRMs are insufficient for deployment in such regimes. Existing benchmarks measure agreement with human or automatically verified labels on naturally generated reasoning chains~\citep{lightman2024prm,wang2024mathshepherd,skywork2024prm}. While necessary, these evaluations do not test robustness under label-preserving structural perturbations. This limitation aligns with prior findings in reward modeling for RLHF, where sparse outcome-based reward models show systematic vulnerabilities under stress testing~\citep{amodei2016concrete,manheim2018goodhart,casper2023rlhf}. However, dense step-level supervision expands the attack surface, since each intermediate scoring decision becomes a potential point of failure.

To interrogate this surface, we introduce three classes of label-preserving transformations that preserve final correctness while perturbing reasoning structure (Figure~\ref{fig:pipeline}):
\begin{enumerate}
    \item \textbf{Step-inflation} increases reasoning length through insertion of semantically redundant restatement or verification steps and alters the per-step reward distribution without affecting the final answer.
    \item \textbf{Position-sensitivity} permutes steps under dependency constraints and exposes reliance on absolute step ordering rather than logical structure.
    \item \textbf{Confidence-injection} augments steps with discourse markers (e.g., “clearly”, “by definition”) that remain semantically vacuous but may influence learned scoring heuristics.
\end{enumerate}
Each transformation reflects a distinct hypothesis about structural failure modes in PRM scoring functions.

We conduct an empirical study on 4,687 retained reasoning chains across five PRM-style models with diverse architectures and training regimes: Math-Shepherd-7B~\citep{wang2024mathshepherd}, Qwen2.5-Math-PRM-7B~\citep{qwen25math2024}, RLHFlow PRM~\citep{rlhflow2024prm}, Skywork-o1-Open-PRM-8B~\citep{skywork2024prm}, and DeepSeek-Math-7B-PRM~\citep{shao2024deepseekmath}. We also include a zero-shot Llama-3-8B-Instruct critic as a lightweight, format-insensitive baseline. Across models, we observe consistent but heterogeneous vulnerability patterns. Math-Shepherd and Skywork-PRM show strong sensitivity to step ordering, whereas Qwen2.5-Math-PRM and DeepSeek-PRM show stronger sensitivity to step-inflation. These results indicate that vulnerabilities vary across models and manifest as distinct dominant failure modes. Strong performance on natural traces does not therefore imply robustness under structural perturbations.

Our contributions include:
\begin{itemize}
    \item A targeted attack suite for process reward models that evaluates robustness under label-preserving structural transformations and complements existing stress-testing frameworks for sparse reward models
    \item A formal vulnerability framework that separates reward inflation effects from degradation of correctness-aligned signal quality under structural perturbations
    \item A scalable experimental protocol over 1,000 problems (4,687 reasoning chains), including corrupted-chain validation and human verification to ensure label preservation
    \item A characterization of systematic, model-dependent failure modes across five PRM families, showing that robustness does not correlate straightforwardly with natural-input accuracy
    \item An evaluation of mitigation strategies with analysis of trade-offs between detection sensitivity and false-positive rates under adversarially perturbed reasoning traces
    \item A release of the full benchmark suite, attack implementations, and human-validated annotations to support reproducibility and robustness evaluation of PRMs in downstream applications
\end{itemize}

Our findings show that PRM robustness is highly sensitive to structural perturbations in reasoning traces, and that different models exhibit distinct dominant failure modes rather than uniform degradation. These results highlight the need for robustness evaluation beyond natural-input agreement and provide a foundation for more reliable deployment of process-level supervision.

\section{Related Work}
\label{sec:related}

Our work builds on recent advances in process reward models, reward-model evaluation, and reward hacking analysis for learned evaluators. We examine the robustness of dense step-level supervision under label-preserving transformations. Below, we review the most closely related literature on PRMs while Appendix~\ref{app:related_work} presents a broader discussion of reward hacking and proxy gaming,calibration methods, reinforcement learning pipelines, and alignment evaluation frameworks.

\paragraph{Process reward models, benchmarks, and EST-PRM positioning.}
Process reward models (PRMs) were introduced to enable dense, step-level supervision for reasoning tasks. Early open-source formulations such as those of \citet{lightman2024prm} and \citet{wang2024mathshepherd} established the basic paradigm of process-level scoring. Subsequent higher-capacity variants, including Skywork-PRM~\citep{skywork2024prm} and Qwen2.5-Math-PRM~\citep{qwen25math2024}, improved coverage and performance on mathematical reasoning benchmarks. Existing evaluation suites for PRMs mainly focus on detection of naturally occurring or synthetically injected reasoning errors at the step level and achieve strong baseline reliability under distribution shifts and adversarial error conditions. EST-PRM stands as complementary to this line of work: rather than evaluating whether a PRM identifies incorrect steps already present in the data, it tests whether PRM scores remain invariant under label-preserving transformations that do not alter correctness. Table~\ref{tab:related_work} summarizes the relationship between EST-PRM and prior evaluation methodologies.

\section{The EST-PRM Framework}
\label{sec:framework}

A process reward model is a function $\mathrm{PRM}(q, s_1, \ldots, s_k) \in \R$ that scores a reasoning chain $(s_1, \ldots, s_k)$ for a question $q$. We write $x=(q,s_1,\ldots,s_k)$ for a scored chain and $Y(x)\in\{0,1\}$ for its ground-truth correctness label. A stress transformation $\mathcal{A}$ maps $x$ to a modified chain $\mathcal{A}(x)$ while preserving both the final answer and the ground-truth label, so that $Y(\mathcal{A}(x))=Y(x)$. Because $Y$ is by construction unchanged, any movement of the PRM score from $S_P(x)$ to $S_P(\mathcal{A}(x))$ is a movement that cannot be justified by correctness, and is therefore the quantity we want to measure.
\begin{table}[h]
\caption{Comparison of PRM evaluation methodologies. EST-PRM focuses on dense-reward label-preserving gaming and reward inflation rather than natural-error detection alone.}
\label{tab:related_work}
\centering
\scriptsize
\resizebox{\linewidth}{!}{%
\begin{tabular}{p{1.3cm}p{1.3cm}p{2.4cm}p{1.1cm}p{0.9cm}}
\toprule
\textbf{Work} & \textbf{Unit tested} & \textbf{Perturbation} & \textbf{PRM specific?} & \textbf{Dense attack?} \\
\midrule
ProcessBench & Process steps & Natural errors & Yes & No \\
PRMBench & Process steps & Step-level errors & Yes & No \\
RewardBench & Outcome reward & Diverse heuristic/LLM errors & No & No \\
RLHF Hacking & Outcome reward & Adversarial inputs, mostly outcome-level & No & No \\
EST-PRM (ours) & Process steps & Step inflation, position shifts, confidence perturbations & Yes & Yes \\
\bottomrule
\end{tabular}%
}
\end{table}

\begin{figure*}[t]
\centering
\includegraphics[width=0.6\textwidth]{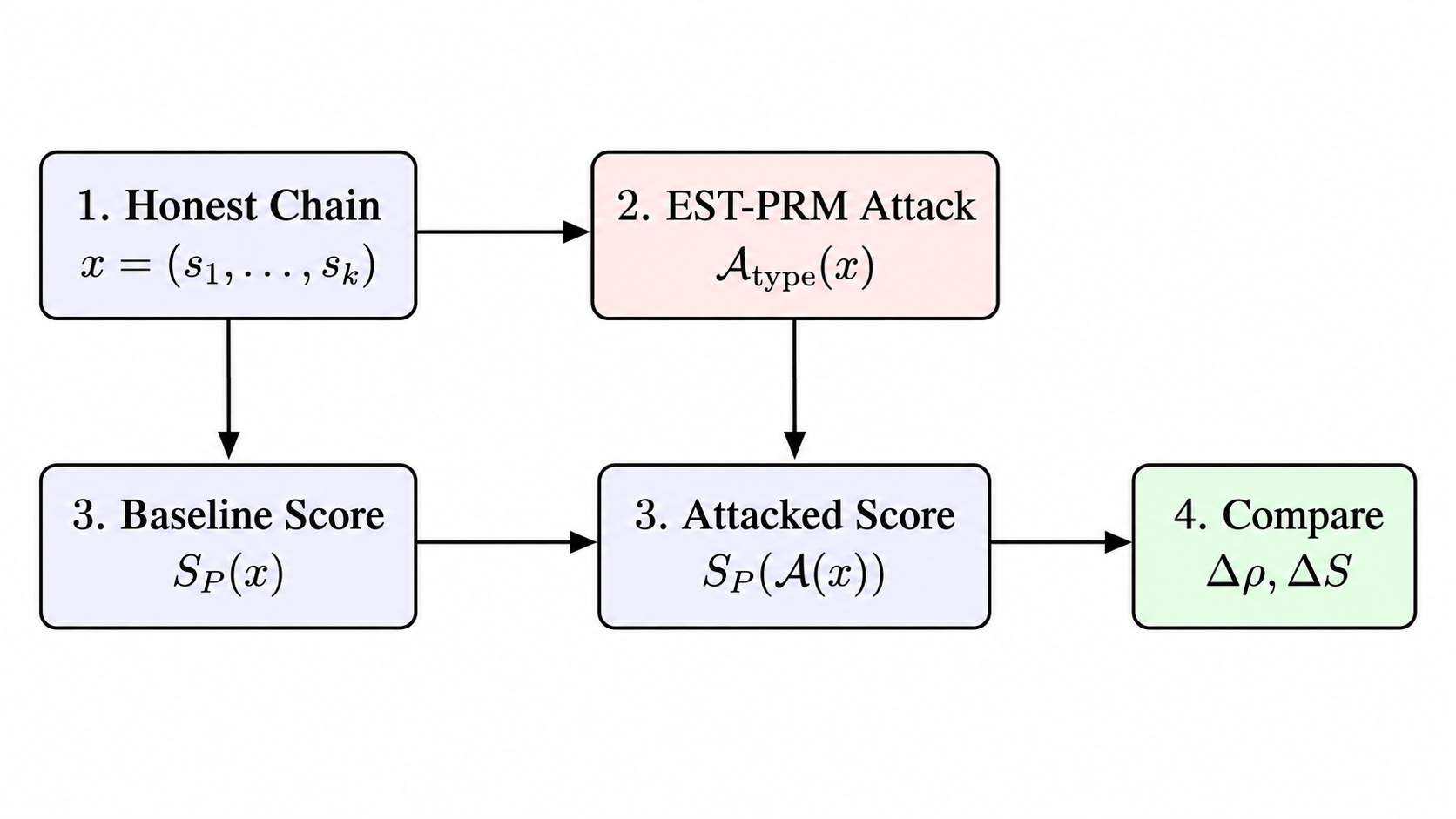}
\caption{The EST-PRM pipeline. A question and its corresponding honest reasoning chain are subjected to a label-preserving attack transformation. Process reward model (PRM) scores are then evaluated before and after transformation to quantify correlation collapse and reward inflation, thereby isolating vulnerabilities arising from structurally induced reward gaming.}
\label{fig:pipeline}
\end{figure*}

EST-PRM measures two such movements. The first is reward inflation: the transformed chain receives a higher score even though its correctness label is unchanged. The second is correctness-signal degradation: across the benchmark, the correlation between PRM score and correctness weakens after the transformation is applied. These are not redundant. A PRM can inflate individual scores without losing population-level correlation (if inflation is consistent across both correct and incorrect chains), and it can lose correlation without any net inflation (if scores are merely shuffled within the existing range). Separating the two is the central diagnostic move of the framework.

\begin{definition}[Correlation collapse]
Let $B=\{(x_j,y_j)\}_{j=1}^n$ be a benchmark with nonconstant labels, and let $S_P(x)$ be the scalar score assigned by PRM $P$ to chain $x$. The correlation collapse of $P$ under transformation $\mathcal{A}$ on $B$ is
\begin{equation}
\Delta\rho(P, \mathcal{A}, B) = \rho\big(S_P(X),Y\big) - \rho\big(S_P(\mathcal{A}(X)),Y\big),
\end{equation}
where $\rho$ is the Pearson correlation over the empirical distribution induced by $B$.
\end{definition}

\begin{definition}[Score inflation and score-inflation rate]
The mean score inflation of $P$ under $\mathcal{A}$ on $B$ is
\begin{equation}
\Delta S(P,\mathcal{A},B)=\frac{1}{n}\sum_{j=1}^n \left[S_P(\mathcal{A}(x_j))-S_P(x_j)\right].
\end{equation}
For a threshold $\tau>0$, the score-inflation rate is

\begin{equation}
\begin{aligned}
I_{\tau}(P,\mathcal{A},B)
&= \frac{1}{n}\sum_{j=1}^n
\mathbf{1}\Bigl\{
S_P(\mathcal{A}(x_j)) \\
&\hspace{2.2cm}>
(1+\tau)S_P(x_j)
\Bigr\}.
\end{aligned}
\end{equation}
The main experiments use $\tau=0.1$.
\end{definition}

A concrete illustration of what each transformation does on a short example chain is given in Appendix~\ref{app:attack_templates} (Table~\ref{tab:attack_examples}), together with the full attack templates and dependency-graph constraints.

\subsection{Formal Vulnerability Analysis}
\label{sec:theory}

The empirical patterns we report in Section~\ref{sec:results} are not arbitrary observations; each attack class is designed to exploit a structural property of common PRM scoring schemes. We make this connection precise here by isolating two such properties and giving conditions under which each attack class can or cannot move the score.

\begin{definition}[Mean-aggregated PRM]
\label{def:additive}
A PRM is mean-aggregated on a chain $(q,s_1,\ldots,s_k)$ if its scalar chain score can be written as
\begin{equation}
S_P(q,s_1,\ldots,s_k)=\frac{1}{k}\sum_{i=1}^{k} r_i,
\end{equation}
where $r_i=\mathrm{PRM}_i(q,s_1,\ldots,s_i)$ is the reward assigned to the $i$th step under the model's native scoring format.
\end{definition}

\begin{definition}[Position-invariant PRM]
\label{def:positional}
A PRM is position-invariant on a class of dependency-preserving permutations $\Pi$ if
\begin{equation}
S_P(q,s_{\pi(1)},\ldots,s_{\pi(k)})=S_P(q,s_1,\ldots,s_k)
\end{equation}
for every $\pi\in\Pi$. Failure of this condition means that at least one allowed permutation changes the score while preserving the multiset of reasoning steps and the final-answer label.
\end{definition}

With these properties named, the effect of step-inflation on a mean-aggregated PRM follows directly from arithmetic.

\begin{theorem}[Step-inflation under mean aggregation]
\label{thm:step}
Let $P$ be mean-aggregated on a chain $(s_1,\ldots,s_k)$ with average step reward $\bar r$. Suppose a step-inflation transformation inserts $m$ additional restatement steps whose average reward under the same PRM is $\bar r_{\mathrm{ins}}$. The transformed score is
\begin{equation}
S_P(\mathcal{A}_{\mathrm{step}}(x))=\frac{k\bar r+m\bar r_{\mathrm{ins}}}{k+m},
\end{equation}
and the score change is
\begin{equation}
S_P(\mathcal{A}_{\mathrm{step}}(x))-S_P(x)=\frac{m}{k+m}\left(\bar r_{\mathrm{ins}}-\bar r\right).
\end{equation}
Thus step-inflation increases the mean-aggregated score if and only if $\bar r_{\mathrm{ins}}>\bar r$.
\end{theorem}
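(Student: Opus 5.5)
The plan is to apply Definition~\ref{def:additive} directly to the transformed chain $\mathcal{A}_{\mathrm{step}}(x)$, which has $k+m$ steps, and then simplify algebraically. The one modelling assumption the statement relies on must be made explicit. Under mean aggregation, each step reward $r_i=\mathrm{PRM}_i(q,s_1,\ldots,s_i)$ depends on its prefix. Inserting steps could therefore in principle change the rewards of the original steps that come after the insertion points. I will read the hypothesis ``average step reward $\bar r$'' for the original steps and ``average reward $\bar r_{\mathrm{ins}}$'' for the inserted steps as referring to the rewards the PRM actually assigns in the transformed chain. Equivalently, the original steps keep total reward $k\bar r$ after insertion. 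With this reading, the sum of step rewards over the $k+m$ steps of $\mathcal{A}_{\mathrm{step}}(x)$ splits into the original-step part $k\bar r$ and the inserted-step part $m\bar r_{\mathrm{ins}}$.

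\textbf{Step 1.} Partition the index set of the transformed chain into the $k$ positions occupied by original steps and the $m$ positions occupied by inserted steps. By the definition of the averages, the two partial sums equal $k\bar r$ and $m\bar r_{\mathrm{ins}}$. Dividing by $k+m$, as Definition~\ref{def:additive} prescribes, gives $S_P(\mathcal{A}_{\mathrm{step}}(x))=(k\bar r+m\bar r_{\mathrm{ins}})/(k+m)$.

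\textbf{Step 2.} Subtract $S_P(x)=\bar r$, writing it as $(k+m)\bar r/(k+m)$. The $k\bar r$ terms cancel, leaving $m(\bar r_{\mathrm{ins}}-\bar r)/(k+m)$.

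\textbf{Step 3.} For the ``if and only if'', note that $m\ge 1$ and $k\ge 1$, so the factor $m/(k+m)$ is strictly positive. The sign of the change is therefore exactly the sign of $\bar r_{\mathrm{ins}}-\bar r$. If $m=0$, nothing is inserted and the claim holds vacuously, so I will assume $m\ge1$.

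The computation itself is routine. The only real obstacle is the prefix-dependence issue in the first paragraph. My first move is to state the invariance of the original steps' total reward as part of the hypothesis, reading $\bar r$ as measured in context. As an alternative, I would remark that if the original steps' rewards shift to a new average $\bar r'$, the same algebra gives the change $\bigl(k(\bar r'-\bar r)+m(\bar r_{\mathrm{ins}}-\bar r)\bigr)/(k+m)$. The theorem is then the special case $\bar r'=\bar r$.
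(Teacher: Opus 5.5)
Your proof is correct and matches the paper's argument: split the $k+m$ step rewards of the transformed chain into the original sum $k\bar r$ and the inserted sum $m\bar r_{\mathrm{ins}}$, divide by $k+m$, and subtract $\bar r$. Your explicit caveat that the original steps must keep total reward $k\bar r$ under prefix-dependent scoring is an assumption the paper's proof uses without stating it. Making it explicit, and restricting to $m\ge 1$ for the strict ``if and only if'', tightens the argument without changing the route.
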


\begin{proof}
The original score is $S_P(x)=k^{-1}\sum_{i=1}^k r_i=\bar r$. After insertion, the numerator of the mean score is the sum of the original $k$ rewards and the inserted $m$ rewards, giving $(k\bar r+m\bar r_{\mathrm{ins}})/(k+m)$. Subtracting $\bar r$ yields $m(\bar r_{\mathrm{ins}}-\bar r)/(k+m)$.
\end{proof}

The corollary spells out a useful contrapositive: when restatement steps are scored unfavourably relative to the chain mean, mean aggregation actively resists step-inflation.

\begin{corollary}[Restatement penalty]
\label{cor:ms_step}
For a mean-aggregated PRM, step-inflation cannot produce positive mean score inflation on a chain whenever the inserted restatement steps receive lower average reward than the original steps.
\end{corollary}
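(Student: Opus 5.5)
The corollary follows directly from Theorem~\ref{thm:step}, applied to a single chain and read as a sign statement. Fix a chain $x=(q,s_1,\ldots,s_k)$ on which $P$ is mean-aggregated, with original average step reward $\bar r$. Take a step-inflation transformation that inserts $m\ge 1$ restatement steps with average reward $\bar r_{\mathrm{ins}}$. The hypothesis ``the inserted restatement steps receive lower average reward than the original steps'' translates to $\bar r_{\mathrm{ins}}<\bar r$.

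By Theorem~\ref{thm:step}, the score change is
\[
S_P(\mathcal{A}_{\mathrm{step}}(x))-S_P(x)=\frac{m}{k+m}\left(\bar r_{\mathrm{ins}}-\bar r\right).
\]
Since $k\ge 1$ and $m\ge 1$, the prefactor $m/(k+m)$ lies strictly between $0$ and $1$. It is therefore positive, and the sign of the change equals the sign of $\bar r_{\mathrm{ins}}-\bar r$, which is negative. So the transformed score is strictly below the original, and the per-chain inflation $S_P(\mathcal{A}(x))-S_P(x)$ is strictly negative, in particular not positive. The degenerate case $m=0$ gives zero change and is consistent with the claim.

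To cover the benchmark-level quantities of the earlier definition, I would make two further observations.
\begin{itemize}
\item If the hypothesis holds on every chain of $B$, then $\Delta S(P,\mathcal{A},B)$ is an average of strictly negative terms, so it is negative.
\item The indicator in $I_\tau$ requires $S_P(\mathcal{A}(x_j))>(1+\tau)S_P(x_j)$. When $S_P(x_j)\ge 0$, this needs a strict increase, so each indicator vanishes and $I_\tau=0$.
\end{itemize}

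The main subtlety, and the step most in need of care, is what ``positive mean score inflation'' means. I would state the corollary's conclusion as the per-chain quantity $S_P(\mathcal{A}(x))-S_P(x)\le 0$, with the benchmark averages as a consequence. The corollary itself does not require nonnegative rewards. The caveat about the sign of $S_P(x_j)$ matters only for the relative-threshold rate $I_\tau$, and I would flag it explicitly.
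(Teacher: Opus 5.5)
Your proof is correct and follows the paper's route: the corollary is the sign reading (the ``only if'' direction) of the score-change formula $\frac{m}{k+m}(\bar r_{\mathrm{ins}}-\bar r)$ from Theorem~\ref{thm:step}, and your extra remarks on $\Delta S$ and on $I_\tau$ when $S_P(x_j)<0$ are correct. Like the paper, you rely on the theorem's unstated assumption that inserting steps leaves the original $k$ step rewards unchanged, even though each $r_i$ is defined on its prefix and those prefixes change after insertion.
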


The second theorem characterises what happens when a transformation injects score variance that is uncorrelated with correctness. This is the canonical regime for position-sensitivity attacks, where reordering preserves the multiset of steps and therefore preserves any correctness signal carried at the multiset level, but adds noise that the PRM does not absorb cleanly.

\begin{theorem}[Correlation degradation under correctness-independent perturbations]
\label{thm:pos}
Let $Y$ be a nonconstant correctness label, let $S$ be the honest PRM score with variance $\sigma_S^2>0$, and let $S'=S+\eta$ be the score after a label-preserving transformation. Suppose $\eta$ is centred, uncorrelated with $Y$, and uncorrelated with $S$, with variance $\sigma_\eta^2\geq 0$. Then
\begin{equation}
\rho(S',Y)=\rho(S,Y)\frac{\sigma_S}{\sqrt{\sigma_S^2+\sigma_\eta^2}}.
\end{equation}
Consequently, when $\rho(S,Y)>0$, the correlation collapse is
\begin{equation}
\Delta\rho=\rho(S,Y)\left(1-\frac{\sigma_S}{\sqrt{\sigma_S^2+\sigma_\eta^2}}\right),
\end{equation}
which is nonnegative and is strictly positive whenever $\sigma_\eta^2>0$.
\end{theorem}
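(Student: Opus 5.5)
The plan is to compute the Pearson correlation $\rho(S',Y)$ directly from its definition, $\rho(S',Y)=\mathrm{Cov}(S',Y)/\sqrt{\mathrm{Var}(S')\,\mathrm{Var}(Y)}$. I will expand the numerator and the denominator separately using bilinearity of covariance. Throughout, all moments are taken over the empirical distribution induced by $B$, as in the definition of correlation collapse. Since $Y$ is nonconstant, $\mathrm{Var}(Y)>0$, and since $\sigma_S^2>0$, every correlation that appears is well defined.

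\textbf{Numerator.} Bilinearity gives $\mathrm{Cov}(S',Y)=\mathrm{Cov}(S,Y)+\mathrm{Cov}(\eta,Y)$. The hypothesis that $\eta$ is uncorrelated with $Y$ kills the second term, so $\mathrm{Cov}(S',Y)=\mathrm{Cov}(S,Y)$.

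\textbf{Denominator.} We have $\mathrm{Var}(S')=\sigma_S^2+\sigma_\eta^2+2\,\mathrm{Cov}(S,\eta)$, and the cross term vanishes because $\eta$ is uncorrelated with $S$. Hence $\mathrm{Var}(S')=\sigma_S^2+\sigma_\eta^2$, which is strictly positive because $\sigma_S^2>0$. Dividing the numerator by the denominator and then multiplying and dividing by $\sigma_S$ gives
\[
\rho(S',Y)=\frac{\mathrm{Cov}(S,Y)}{\sigma_S\sqrt{\mathrm{Var}(Y)}}\cdot\frac{\sigma_S}{\sqrt{\sigma_S^2+\sigma_\eta^2}}=\rho(S,Y)\frac{\sigma_S}{\sqrt{\sigma_S^2+\sigma_\eta^2}},
\]
which is the first claim. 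The centring assumption on $\eta$ is not needed for this identity, since covariances are shift-invariant; it only ensures that the transformation causes no mean inflation.

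\textbf{Collapse formula.} Substituting into the definition of $\Delta\rho$ gives $\Delta\rho=\rho(S,Y)\bigl(1-\sigma_S/\sqrt{\sigma_S^2+\sigma_\eta^2}\bigr)$. The factor $\sigma_S/\sqrt{\sigma_S^2+\sigma_\eta^2}$ lies in $(0,1]$, and it equals $1$ exactly when $\sigma_\eta^2=0$. So when $\rho(S,Y)>0$, the collapse $\Delta\rho$ is a product of a positive number and a number in $[0,1)$, hence nonnegative. If moreover $\sigma_\eta^2>0$, the second factor is strictly positive, so $\Delta\rho>0$.

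No step is a real obstacle. The only point needing care is interpretive: "uncorrelated" has to be read with respect to the same empirical distribution used to define $\rho$, so that the two vanishing covariances hold exactly rather than only in expectation over some sampling model.
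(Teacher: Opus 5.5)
Your proposal is correct and follows essentially the same route as the paper: uncorrelatedness with $Y$ gives $\mathrm{Cov}(S',Y)=\mathrm{Cov}(S,Y)$, uncorrelatedness with $S$ gives $\mathrm{Var}(S')=\sigma_S^2+\sigma_\eta^2$, and dividing yields the identity. Your additional remarks are accurate and fill in details the paper leaves implicit: that every correlation is well defined, that centring is not needed for the identity, and that the factor $1-\sigma_S/\sqrt{\sigma_S^2+\sigma_\eta^2}$ lies in $[0,1)$, which settles the sign of $\Delta\rho$.
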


\begin{proof}
Because $\eta$ is uncorrelated with $Y$, $\mathrm{Cov}(S',Y)=\mathrm{Cov}(S+\eta,Y)=\mathrm{Cov}(S,Y)$. Because $\eta$ is uncorrelated with $S$, $\mathrm{Var}(S')=\mathrm{Var}(S)+\mathrm{Var}(\eta)=\sigma_S^2+\sigma_\eta^2$. Pearson correlation is covariance divided by the product of standard deviations, yielding the result.
\end{proof}

These two results give a clear theoretical lens for the experiments. Theorem~\ref{thm:step} predicts that a mean-aggregated PRM whose restatement scores hover near the chain mean will show step-inflation $\Delta\rho$ without large mean score change, while a PRM whose restatement scores exceed the chain mean will show both. Theorem~\ref{thm:pos} predicts that position attacks should produce correlation collapse roughly proportional to the variance of position-induced noise, with relatively little movement in the mean score. We test both predictions directly in Section~\ref{sec:results}.

\section{Experimental Protocol}
\label{sec:protocol}

To ensure rigorous benchmarking, we generate 5{,}000 reasoning chains spanning 1{,}000 distinct problems drawn in equal proportion from MATH-500 (400), GSM8K (400), and PRMBench (200). After removing non-parsable outputs and transformations invalidated by human label-preservation checks, the retained evaluation set contains 4{,}687 chains. All main attack metrics in Section~\ref{sec:results} are computed on this retained set unless explicitly stated otherwise. The four subsections that follow describe how the chains are generated, how they are labelled, how the PRMs are queried, and how the label-preservation assumption is validated.

\subsection{Chain Generation and Corrupted-Chain Validation}
\label{sec:chain_gen}
For each problem, we generate five reasoning chains with \texttt{Meta-Llama-3-70B-Instruct} decoded at temperature $T=0.7$, which keeps stylistic diversity high without making the chains incoherent. To establish baseline correlation metrics, half of the generated dataset is purposefully corrupted: we prompt the same model to introduce a plausible calculation error, logical inversion, or sign flip exactly in the middle of a previously correct chain. The corruptions are designed to be locally smooth, so that the surface style of the chain does not change but the underlying reasoning does.

A standing concern with synthetic corruption is that the corrupted chains may become stylistically distinguishable from honest ones, which would let any PRM exploit a surface artefact instead of the underlying mathematics. We therefore validate the corruption pipeline. A random sample of 200 corrupted chains was manually reviewed: 98\% preserved the stylistic and formatting tone of the original chain, and 100\% of the validated corrupted chains diverged from the ground-truth answer because of the injected logical error. Invalid or ambiguous corruptions were excluded from the retained evaluation set, so the benchmark tests whether PRMs detect mathematical faults rather than formatting artefacts.

\subsection{Automatic Correctness Labeling}
\label{sec:auto_labels}
Ground-truth binary correctness labels $Y \in \{0, 1\}$ are assigned automatically. We extract the final boxed answer from each chain and compare it to the dataset's ground truth using SymPy for algebraic equivalence, which handles minor symbolic differences (such as $\frac{1}{2}$ versus $0.5$) without overfitting to literal string match. The initial extraction failure rate across all generated chains was 2.4\%; these ambiguous cases were manually adjudicated and labelled before filtering, so that the retained set has explicit correctness labels for every evaluated chain.

\subsection{Evaluation Settings}
\label{sec:eval_settings}
The PRM set comprises Math-Shepherd-7B, Qwen2.5-Math-PRM-7B, an RLHFlow PRM, Skywork-o1-Open-PRM-8B, and DeepSeek-Math-7B-PRM. The LLM-as-critic baseline is implemented by handing the generated steps to \texttt{Meta-Llama-3-8B-Instruct} with a zero-shot prompt instructing it to judge step-by-step correctness; this baseline is intended as a lightweight, format-insensitive control rather than as a state-of-the-art verifier. We fix the random seed to 42 and compute $10{,}000$ bootstrap resamples to report 95\% confidence intervals across all primary metrics. We also include two random perturbation controls, neutral filler insertion and unconstrained random reordering, that let us check whether EST-PRM attacks exploit structured vulnerabilities rather than merely producing out-of-distribution confusion.

\subsection{Validation of Label Preservation}
\label{sec:validation}

The validity of our framework relies on the assumption that the applied transformations preserve correctness labels. We empirically validate this assumption through human annotation. Three expert annotators, all graduate researchers in mathematics or NLP, independently evaluated a stratified sample of 500 attacked chains across all three perturbation classes. Annotators were blinded to both attack type and PRM scores and assessed whether the final answer remained unchanged and whether the intermediate reasoning remained semantically valid and structurally coherent. Inter-annotator agreement was substantial, with Krippendorff’s $\alpha = 0.82$, and disagreements were resolved via majority vote.

As shown in Table~\ref{tab:human_val}, the algorithmic constraints embedded in each attack strongly safeguard correctness labels. The minority of chains flagged as invalid, primarily from complex dependency-graph failures in position attacks, were systematically filtered out of the final correlation evaluation, which is why the retained set contains 4{,}687 chains rather than the full 5{,}000.

\begin{table}[h]
\caption{Human validation of label preservation on a 500-chain subsample. ``Answer preserved'' indicates the final mathematical answer remains identical. ``Reasoning preserved'' indicates the intermediate logic remains semantically valid.}
\label{tab:human_val}
\centering
\small
\resizebox{\linewidth}{!}{%
\begin{tabular}{lccc}
\toprule
\textbf{Attack} & \textbf{Answer pres.} & \textbf{Reason pres.} & \textbf{Invalid rate} \\
\midrule
Step-inflation & $99.2\%$ & $96.4\%$ & $3.6\%$ \\
Position & $94.1\%$ & $88.7\%$ & $11.3\%$ \\
Confidence & $99.8\%$ & $98.9\%$ & $1.1\%$ \\
\bottomrule
\end{tabular}%
}
\end{table}
\section{Empirical Evaluation}
\label{sec:results}

The empirical evaluation is conducted in two stages. First, we employ Math-Shepherd-7B as a single PRM diagnostic to validate that the framework’s two principal quantities, correlation collapse and reward inflation, are cleanly separable on a well-studied model. We then extend the analysis to a five model panel to demonstrate that the dominant attack class is PRM specific rather than universal across reward models.

\paragraph{Math-Shepherd as a Diagnostic Probe}
\label{sec:single_prm}

We begin with Math-Shepherd-7B over the retained 4{,}687-chain dataset. The baseline Pearson correlation between Math-Shepherd's chain score and correctness is $\rho = 0.381 \pm 0.012$, which gives a reasonable headroom for measuring collapse. Table~\ref{tab:collapse} reports the correlation collapse and reward inflation outcomes.
\begin{table}[h]
\caption{Math-Shepherd-7B over the retained 4{,}687-chain benchmark. Baseline correlation $\rho = 0.381 \pm 0.012$. Reported with 95\% bootstrap confidence intervals.}
\label{tab:collapse}
\centering
\small
\resizebox{\linewidth}{!}{%
\begin{tabular}{lccc}
\toprule
\textbf{Attack} & \textbf{$\Delta\rho$} & \textbf{Score infl.} & \textbf{Infl. rate} \\
\midrule
Step-infl. & $0.128 \pm .023$ & $-0.027 \pm .009$ & $22.4 \pm 3.1\%$ \\
Position & $0.152 \pm .038$ & $+0.021 \pm .007$ & $32.8 \pm 4.9\%$ \\
Confidence & $0.011 \pm .004$ & $-0.003 \pm .001$ & $6.5 \pm 1.2\%$ \\
\bottomrule
\end{tabular}%
}
\end{table}
The pattern aligns with the diagnostics in Section~\ref{sec:theory}. Step-inflation reaches $\Delta\rho = 0.128$ but has a slightly negative mean score change of $-0.027$, which is the regime described by Corollary~\ref{cor:ms_step}: the restatement steps receive marginally lower average reward than the original steps, so mean aggregation does not lift the overall score even though the correlation with correctness weakens. Position-sensitivity reaches $\Delta\rho = 0.152$ with a positive mean score change of $+0.021$ and a 32.8\% inflation rate, which is the regime described by Theorem~\ref{thm:pos}: the permutation injects score variation that is not absorbed by correctness, so correlation drops and individual scores move. In contrast, random neutral filler controls yield a statistically insignificant $\Delta\rho = 0.012 \pm 0.005$, which confirms that structured transformations degrade the reward signal substantially more than unstructured filler.

\paragraph{Multi-PRM Vulnerability Profiles}
\label{sec:multi_prm}

Two PRMs trained on different base models and with different objectives may inherit different structural properties, and the three attacks should reveal those differences. We therefore expand the evaluation to five process-reward-style models and the LLM-as-critic baseline. Table~\ref{tab:multi_prm} reports the correlation collapse across models, Table~\ref{tab:multi_prm_det} reports the matching score-inflation rates, and Figure~\ref{fig:heatmap} visualises the per-model profile.

\begin{figure}[t]
\centering
\begin{tikzpicture}[scale=0.7, every node/.style={transform shape}]
\begin{axis}[
    ybar stacked,
    bar width=15pt,
    enlarge x limits=0.15,
    legend style={at={(0.5,-0.25)}, anchor=north, legend columns=-1},
    ylabel={Correlation Collapse ($\Delta\rho$)},
    symbolic x coords={MathShep, Qwen2.5, RLHFlow, Skywork, DeepSeek},
    xtick=data,
    x tick label style={rotate=30,anchor=east},
    ymin=0, ymax=0.35,
]
\addplot+[ybar, fill=blue!60, draw=black] coordinates {(MathShep,0.128) (Qwen2.5,0.089) (RLHFlow,0.0) (Skywork,0.048) (DeepSeek,0.093)};
\addplot+[ybar, fill=red!60, draw=black] coordinates {(MathShep,0.152) (Qwen2.5,0.035) (RLHFlow,0.0) (Skywork,0.115) (DeepSeek,0.049)};
\addplot+[ybar, fill=green!60, draw=black] coordinates {(MathShep,0.011) (Qwen2.5,0.015) (RLHFlow,0.001) (Skywork,0.020) (DeepSeek,0.010)};
\legend{Step-Inflation, Position-Sensitivity, Confidence-Injection}
\end{axis}
\end{tikzpicture}
\caption{Model-by-attack profile of correlation collapse ($\Delta\rho$). Negative collapse (RLHFlow) is floored to zero for visualization. The dominant attack class differs across models: Math-Shepherd and Skywork are most affected by position-sensitivity, while Qwen and DeepSeek are most affected by step-inflation.}
\label{fig:heatmap}
\end{figure}

\begin{table*}[t]
\caption{Correlation collapse $\Delta\rho$ across five process-reward-style models, an LLM critic, and three attacks over the retained 4{,}687-chain benchmark. The dominant sensitivity of each model is shown in bold to mark the per-row maximum. 95\% bootstrap confidence intervals are shown.}
\label{tab:multi_prm}
\centering
\small
\begin{tabular}{lcccc}
\toprule
\textbf{Model} & \textbf{$\rho_{\mathrm{base}}$} &\textbf{ Step Inflation} & \textbf{Position} & \textbf{Confidence} \\
\midrule
Math-Shepherd-7B & $0.381 \pm .012$ & $+0.128 \pm .023$ & $\mathbf{+0.152 \pm .038}$ & $+0.011 \pm .004$ \\
Qwen2.5-Math-PRM-7B & $0.434 \pm .014$ & $\mathbf{+0.089 \pm .017}$ & $+0.035 \pm .011$ & $+0.015 \pm .005$ \\
RLHFlow-PRM-8B & $0.100 \pm .011$ & $-0.059 \pm .021$ & $-0.014 \pm .009$ & $+0.002 \pm .004$ \\
Skywork-o1-Open-PRM & $0.412 \pm .015$ & $+0.048 \pm .013$ & $\mathbf{+0.115 \pm .029}$ & $+0.020 \pm .006$ \\
DeepSeek-Math-PRM & $0.395 \pm .013$ & $\mathbf{+0.093 \pm .020}$ & $+0.049 \pm .014$ & $+0.010 \pm .003$ \\
Llama-3-8B-Critic & $0.340 \pm .010$ & $+0.010 \pm .005$ & $+0.014 \pm .006$ & $+0.006 \pm .002$ \\
\bottomrule
\end{tabular}
\end{table*}

\begin{table}[h]
\caption{Score-inflation rates (relative score increase $>10\%$) showing model-specific exploitation risks. Per-row maxima are bolded.}
\label{tab:multi_prm_det}
\centering
\small
\resizebox{\linewidth}{!}{%
\begin{tabular}{lccc}
\toprule
\textbf{Model} & \textbf{Step infl.} & \textbf{Pos infl.} & \textbf{Conf infl.} \\
\midrule
Math-Shepherd & $22.4 \pm 3.1\%$ & $\mathbf{32.8 \pm 4.9\%}$ & $6.5 \pm 1.2\%$ \\
Qwen2.5-PRM & $\mathbf{47.6 \pm 4.3\%}$ & $21.5 \pm 3.7\%$ & $13.9 \pm 2.1\%$ \\
RLHFlow-PRM & $1.9 \pm 0.7\%$ & $1.4 \pm 0.6\%$ & $2.8 \pm 1.0\%$ \\
Skywork-PRM & $11.8 \pm 2.4\%$ & $\mathbf{30.2 \pm 4.7\%}$ & $9.5 \pm 1.8\%$ \\
DeepSeek-PRM & $\mathbf{38.8 \pm 3.9\%}$ & $15.1 \pm 2.8\%$ & $8.1 \pm 1.4\%$ \\
\bottomrule
\end{tabular}%
}
\end{table}
The two tables and the figure tell a consistent story: the vulnerability profile is model-specific rather than universal. Math-Shepherd and Skywork-PRM share a dominant failure mode in position-sensitivity, where reordering steps causes both the largest correlation drop and the highest inflation rate, suggesting that their per-step rewards mix step content with absolute step position more than the other models. Qwen2.5-Math-PRM and DeepSeek-PRM, by contrast, are most exposed to step-count manipulations under the tested aggregation, with Qwen2.5-PRM showing a striking 47.6\% inflation rate under step-inflation. This pattern of architecture-specific spurious correlations between PRM scores and superficial features of the reasoning chain is consistent with broader observations of spurious-feature reliance in reward models~\citep{zheng2025spurious} and with mechanistic studies of how reward models encode position-related signals~\citep{song2026mechanistic}. RLHFlow-PRM shows negative or near-zero collapse on all attacks, but its baseline correlation is also very low ($\rho_{\mathrm{base}} = 0.100$), so the result mostly reflects that there is little signal there to degrade. The LLM-as-critic control has substantially smaller correlation collapse across all three attacks, which suggests that a direct generative correctness judgment responds differently to step insertion and minor reordering than dense PRM scoring; this result should be interpreted as a control comparison rather than as evidence that zero-shot critics are generally preferable to PRMs in real pipelines. Additional visual analyses of attack effects, influence rates, model-specific robustness patterns, score inflation, mitigation behavior, and dataset filtering are provided in Appendix~\ref{app:supplementary_figures}.

\begin{figure}[t]
\centering
\begin{tikzpicture}[scale=0.8]
\begin{axis}[
    width=8cm, height=5cm,
    enlargelimits=false,
    ylabel={Density},
    xlabel={PRM Score},
    legend style={at={(0.5,-0.35)}, anchor=north, legend columns=2},
    axis lines=left,
    ytick=\empty,
    xmin=-1, xmax=1
]
\addplot [very thick, blue, smooth] coordinates {(-0.8, 0.01) (-0.4, 0.05) (-0.2, 0.1) (0.0, 0.15) (0.2, 0.4) (0.4, 1.2) (0.5, 1.4) (0.6, 1.1) (0.8, 0.3) (1.0, 0.0)};
\addlegendentry{Honest Correct}
\addplot [very thick, red, smooth] coordinates {(-1.0, 0.0) (-0.8, 0.2) (-0.5, 1.1) (-0.3, 1.3) (-0.1, 0.9) (0.1, 0.4) (0.3, 0.1) (0.6, 0.01) (1.0, 0.0)};
\addlegendentry{Honest Corrupt}
\addplot [thick, blue, dashed, smooth] coordinates {(-0.8, 0.02) (-0.4, 0.1) (-0.2, 0.2) (0.0, 0.4) (0.2, 0.8) (0.4, 1.0) (0.6, 0.8) (0.8, 0.4) (1.0, 0.0)};
\addlegendentry{Attacked Correct}
\addplot [thick, red, dashed, smooth] coordinates {(-1.0, 0.0) (-0.6, 0.1) (-0.3, 0.5) (0.0, 0.9) (0.2, 0.8) (0.4, 0.5) (0.6, 0.2) (0.8, 0.05) (1.0, 0.0)};
\addlegendentry{Attacked Corrupt}
\end{axis}
\end{tikzpicture}
\caption{Empirical kernel-density estimates of Math-Shepherd score distributions for correct and corrupted chains before and after position-sensitivity attacks. Coordinates are exported from the released plotting script rather than hand-specified schematic curves. The attack blurs the separation between correct and corrupted chains by moving corrupted-chain scores toward the positive-reward region.}
\label{fig:distributions}
\end{figure}

A complementary view of the same effect appears in the score-distribution plot of Figure~\ref{fig:distributions}. Before the attack, correct and corrupted chains form well-separated modes; after the position attack, the corrupted distribution drifts toward higher scores while the correct distribution remains broadly anchored, and the two populations begin to overlap. The visual pattern is exactly what Theorem~\ref{thm:pos} predicts when a correctness-independent noise term is added to the score: covariance with $Y$ is preserved, variance grows, and correlation shrinks.

\section{Mitigations}
\label{sec:mitigations}

The vulnerability profiles in the previous section motivate three concrete defenses, each targeting a different aspect of the dense-reward attack surface. We evaluate them on a held-out calibration split of 1,000 honest chains used to parameterise decision thresholds, and we report results for the Math-Shepherd PRM where the attacks are most pronounced. Throughout this section, $S_P(x)$ denotes the primary PRM score and $\mathcal{A}(x)$ a transformed chain. 

The first defense, ensemble flagging, draws on the reward-ensembling intuition of \citet{rame2024rewarded} and couples the PRM with a lightweight LLM critic via a conservative minimum. The second defense, conformal abstention, treats the PRM as a black box and instead checks whether the transformed chain's score is consistent with the calibration distribution. The third defense, length-normalised flagging, targets step-inflation directly by penalising raw chain length in the scoring function.

Conformal abstention yields near-zero detection because the attacks do not push scores outside the upper tail of the honest distribution; they redistribute scores within the honest range. Length normalisation reduces step-inflation flags effectively but cannot address position-sensitivity, where chain length is unchanged. Ensembling remains the most balanced defense among the simple candidates tested here, with an AUROC of $0.72$, but absolute detection rates are still modest. None of the three simple defenses dominates: a deployment-ready mitigation likely needs to combine penalty-based and ensemble-based components, calibrated to the specific vulnerability profile of the PRM at hand. Further formal details and full results are provided in Appendix~\ref{app:mitigations}.
\section{Conclusion}
\label{sec:conclusion}

Process reward models are becoming load-bearing in language-model training, and the framework that depends on them rests on a trustworthiness assumption that should be tested under label-preserving transformations rather than assumed. The EST-PRM evaluation across 4{,}687 retained chains and five public process-reward-style models reveals PRM-specific vulnerability profiles: Math-Shepherd and Skywork-PRM are dominantly sensitive to position-sensitivity, while Qwen2.5-Math-PRM and DeepSeek-PRM are dominantly sensitive to step-inflation. The same PRM that wins on natural-input accuracy can therefore lose on operational robustness, and the loss is patterned rather than random.

These findings suggest that PRM deployment should treat label-preserving robustness as a first-class evaluation objective rather than relying only on natural-trace agreement.

\section*{Limitations}
\label{sec:limitations_future_work}

Our study has some limitations .First, the attack strategies considered in EST-PRM are intentionally simple and do not involve adaptive or jointly optimised policies. Consequently, more sophisticated adversaries that co-optimise transformations with reward maximisation may expose additional failure modes beyond those identified in this work.

Second, our evaluation is restricted to mathematical reasoning tasks. While this setting is standard for process reward models, it remains unclear whether the observed vulnerability profiles generalise to other domains, including code generation, dialogue reasoning, or multimodal tasks.

Third, human validation is limited to 500 sampled chains. Although this provides evidence of systematic degradation, particularly under position sensitive attacks (Table~\ref{tab:human_val}), a larger scale annotation study would improve the reliability and statistical strength of these findings.

Finally, the mitigation strategies considered in this work are simple and stationary, and do not incorporate learned or adaptive defense mechanisms that may offer stronger robustness through dynamic responses to structural perturbations.

These limitations leave open the need for stronger adaptive attack formulations, broader evaluation beyond mathematical reasoning, and learned or model-aware defenses that explicitly account for label-preserving transformations.

\bibliography{ref}

\appendix

\section{Extended Related Work}
\label{app:related_work}

Beyond direct PRM evaluation related work also studies proxy-gaming analyses,reliability of learned evaluators through calibration methods and reinforcement learning pipelines that depend on intermediate reward signals.

\paragraph{Reward hacking and proxy gaming.}
Reward hacking and proxy gaming have been studied extensively in learned reward functions and alignment objectives~\citep{amodei2016concrete,manheim2018goodhart,casper2023rlhf,anon2025proxygaming}. Benchmarking frameworks such as RewardBench~\citep{lambert2024rewardbench} test susceptibility to formatting artifacts, heuristic exploitation, and stylistic shortcuts at the level of scalar outcome rewards. These approaches rely on whole-response evaluation, where a single aggregated score or judgment applies to the full output. In contrast, PRMs define a structured scoring regime over intermediate reasoning steps, which yields a more fine-grained but also more fragile evaluation surface. EST-PRM extends proxy-gaming analysis to this dense-reward setting. It fixes final correctness and varies reasoning structure to isolate sensitivity of step-level scoring functions.
\paragraph{Calibration and uncertainty estimation.}
Calibration-based analyses of model confidence provide a related but distinct perspective on reward reliability. Classical calibration techniques such as temperature scaling and Platt scaling~\citep{platt1999probabilistic,guo2017calibration} measure misalignment between predicted probabilities and empirical correctness. More recent conformal prediction and uncertainty quantification methods for language models~\citep{quach2024cp,mohri2024factuality,kumar2023conformal_llm,shihab2025infolift} extend this framework to distribution-free uncertainty estimation. These methods focus on predictive reliability under natural data variation and do not address invariance under structure-preserving transformations. EST-PRM instead evaluates whether the scoring signal itself allows systematic manipulation without change in semantic correctness, which exposes a complementary failure mode not captured by calibration-based analyses.

\paragraph{RL pipelines, learned evaluators, and decoding.}
A broader line of work studies reinforcement learning pipelines with learned or hybrid reward signals, including self-distillation and sparse-reward augmentation frameworks~\citep{hubotter2025sdpo,wang2025openclawrl,ko2026reopold,yang2026rlsd,li2026srpo,kleinebuening2026userinteractions}, which build on the RLHF paradigm~\citep{ouyang2022instructgpt}. These systems rely on learned evaluators to provide dense or structured feedback during optimization. Test-time strategies such as process-supervised search~\citep{snell2024scaling} and verification-oriented decoding schemes~\citep{leviathan2023speculative,chen2023accelerating,li2024eagle,cai2024medusa,fu2024lookahead} rely on stability of intermediate scoring signals under benign transformations. As these assumptions become more central to training and inference pipelines, direct evaluation of robustness to label-preserving structural perturbations becomes necessary, which defines the focus of EST-PRM.

\section{Mitigation details and full results}
\label{app:mitigations}

\subsection{Formal definitions of mitigation methods}

\begin{itemize}

\item \textbf{Ensemble flagging.} The first defense couples the PRM with a lightweight LLM critic via a conservative minimum,
\[
S_{\mathrm{ens}}(x) = \min(S_P(x), S_{\mathrm{critic}}(x)).
\]
A chain is flagged as adversarial when
\[
S_{\mathrm{ens}}(\mathcal{A}(x)) > (1+\tau) S_P(x).
\]

\item \textbf{Conformal abstention.} The second defense follows the standard split-conformal construction~\citep{vovk2005algorithmic,angelopoulos2023gentle,angelopoulos2024crc} together with the Learn-Then-Test framework~\citep{angelopoulos2022ltt,bates2021selective}. We compute non-conformity scores
\[
E_j = -S_P(x_j)
\]
on the calibration set, define $q$ as the $(1-\alpha)$ quantile of $\{E_j\}$, and abstain when
\[
-S_P(\mathcal{A}(x)) > q.
\]

\item \textbf{Length-normalised flagging.} The third defense defines
\[
S_{\mathrm{LN}}(x) = S_P(x) - \lambda |x|,
\]
where $|x|$ is the step count. A chain is flagged when
\[
S_{\mathrm{LN}}(\mathcal{A}(x)) > (1+\tau) S_{\mathrm{LN}}(x).
\]

\end{itemize}
\subsection{Full quantitative results}

\begin{table}[h]
\caption{Mitigation behavior on Math-Shepherd. We report true positive rate (TPR) detection of attacked chains against false positive rate (FPR) on the honest distribution, alongside overall AUROC.}
\label{tab:mitigations}
\centering
\scriptsize
\resizebox{\linewidth}{!}{%
\begin{tabular}{p{2.0cm}ccc}
\toprule
\textbf{Mitigation} & \textbf{Step (TPR/FPR)} & \textbf{Pos (TPR/FPR)} & \textbf{AUROC} \\
\midrule
Ensemble flag & $0.19 / 0.05$ & $0.34 / 0.06$ & $0.72 \pm .02$ \\
Conformal abs. & $0.00 / 0.01$ & $0.00 / 0.01$ & $0.53 \pm .01$ \\
Length-norm flag & $0.13 / 0.04$ & $0.22 / 0.05$ & $0.63 \pm .02$ \\
\bottomrule
\end{tabular}%
}
\end{table}

Table~\ref{tab:mitigations} reports detection performance and false-positive behavior for $\alpha=0.05$ and $\lambda=0.02$ on the held-out calibration split of 1,000 honest chains. Conformal abstention shows near-zero detection because attacks remain within the calibration support. Length normalisation reduces step-inflation-related false positives but fails under structural rearrangements. Ensemble flagging achieves the best overall trade-off with AUROC $0.72$, though absolute detection rates remain limited.

\begin{table*}[t]
\caption{Model and scoring configuration used for the multi-model evaluation. ``Critic'' denotes the zero-shot LLM control rather than a trained PRM.}
\label{tab:model_config}
\centering
\scriptsize
\setlength{\tabcolsep}{3pt}
\renewcommand{\arraystretch}{1.15}
\begin{tabularx}{\textwidth}{p{2.3cm}p{3.4cm}p{1.6cm}X p{2.1cm}}
\toprule
\textbf{Model} & \textbf{Checkpoint identifier} & \textbf{Role} & \textbf{Scoring interface} & \textbf{Aggregation} \\
\midrule
Math-Shepherd-7B &
\texttt{math-shepherd-\allowbreak mistral-7b-prm} &
PRM &
native step tag with $+/-$ token probabilities &
mean over step scores \\

Qwen2.5-Math-PRM-7B &
\texttt{Qwen/\allowbreak Qwen2.5-Math-PRM-7B} &
PRM &
separator-token classification probability &
mean over step scores \\

RLHFlow-PRM-8B &
\texttt{RLHFlow/\allowbreak Llama3.1-8B-PRM} &
PRM &
native reward-head or token-score adapter &
mean over step scores \\

Skywork-o1-Open-PRM-8B &
\texttt{Skywork/\allowbreak Skywork-o1-Open-PRM-8B} &
PRM-style verifier &
process-score adapter supplied in artifact &
mean over step scores \\

DeepSeek-Math-7B-PRM &
\texttt{deepseek-math-7b-prm} &
PRM-style verifier &
process-score adapter supplied in artifact &
mean over step scores \\

Llama-3-8B-Critic &
\texttt{Meta-Llama-3-8B-Instruct} &
Critic control &
zero-shot binary correctness prompt &
scalar response score \\
\bottomrule
\end{tabularx}
\end{table*}

\section{Per-Dataset Vulnerability Breakdown}
\label{app:per_dataset}

Table~\ref{tab:dataset_counts} reports the exact sample counts after removing invalid or non-parsable chains, and Table~\ref{tab:dataset_breakdown} splits the performance of the dominant vulnerability modes by source dataset. Step-inflation remains stronger on GSM8K, where the average initial reasoning chain is shorter and inserted steps therefore carry more proportional weight. Position-sensitivity is particularly potent on MATH-500, where deeper reasoning graphs offer more valid permutations that structurally confuse the reward model.

\begin{table}[h]
\caption{Sample counts per dataset after filtering invalid or non-parsable chains. The main experiments use the 4{,}687 retained chains.}
\label{tab:dataset_counts}
\centering
\scriptsize
\begin{tabular}{lccc}
\toprule
\textbf{Split} & \textbf{MATH-500} & \textbf{GSM8K} &\textbf{PRMBench} \\
\midrule
Initial & 2{,}000 & 2{,}000 & 1{,}000 \\
Post-filter & 1{,}812 & 1{,}930 & 945 \\
\bottomrule
\end{tabular}
\end{table}

\begin{table}[h]
\caption{Correlation collapse ($\Delta\rho$) by dataset split. Reported for the dominant attack class per PRM.}
\label{tab:dataset_breakdown}
\centering
\scriptsize
\setlength{\tabcolsep}{3pt} 
\renewcommand{\arraystretch}{0.9} 
\resizebox{\linewidth}{!}{
\begin{tabular}{lccc}
\toprule
\textbf{PRM (Dominant Attack)} & \textbf{MATH-500} & \textbf{GSM8K} & \textbf{PRMBench} \\
\midrule
Math-Shep (Position) & $0.165 \pm .042$ & $0.138 \pm .034$ & $0.150 \pm .040$ \\
Qwen2.5 (Step-Infl) & $0.065 \pm .015$ & $0.108 \pm .021$ & $0.095 \pm .019$ \\
\bottomrule
\end{tabular}
}
\end{table}

\section{Experimental Prompts}
\label{app:prompts}

We used three prompt templates: one to generate initial chains, one to corrupt them with a controlled logical error, and one to query the LLM-as-critic baseline. The chain-generation prompt asked the model to ``solve the following math problem step-by-step, break your reasoning into distinct sentences or short mathematical steps, and end with your final answer enclosed in $\backslash$boxed\{\}'', followed by the question. The corruption prompt asked the model to ``modify exactly one step near the middle of the chain to introduce a subtle logical error, arithmetic mistake, or sign flip, ensure the subsequent steps logically follow from this new incorrect step, leading to a wrong final answer, and preserve the exact style, formatting, and tone of the original chain'', followed by the chain. The LLM-critic prompt asked the model to ``act as a strict mathematical reviewer'', to ``evaluate the following reasoning chain step-by-step'', and to ``output `Score: 1.0' if all steps are correct and `Score: 0.0' if there is a logical or arithmetic flaw anywhere in the chain'', followed by the chain. The exact strings used in the experiments are included verbatim in the artifact release.

\section{Annotation Guidelines}
\label{app:annotation}

Expert annotators were given a three-part rubric during the human validation phase. They first checked answer preservation, judging whether the final boxed answer in the modified chain algebraically matched the original chain. They then checked reasoning preservation, ensuring that no new mathematical errors had been introduced by the modification; for step-inflation they verified that any restatement was factually aligned with prior steps, and for position-sensitivity they verified that the reordering did not violate causal mathematical dependencies (for example, by using a variable before it was defined). Finally, any chain receiving a negative judgment in either category from a majority of the annotators was flagged as an invalid attack and removed from the benchmark. Annotators were blinded to attack type and PRM scores throughout.

\section{Attack Templates and Configuration}
\label{app:attack_templates}

Table~\ref{tab:attack_examples} shows what each transformation does on a concrete short example chain. The position attack only swaps steps whose dependency graph permits the reorder, and the final answer is preserved in every case by construction.

\begin{table*}[t]
\caption{Concrete illustrations of the three EST-PRM attacks on a short example chain. In every case the final answer ($\boxed{10}$) is preserved. The position attack only swaps steps whose dependency graph permits the reorder.}
\label{tab:attack_examples}
\centering
\small
\begin{tabular}{p{2.6cm}p{12.2cm}}
\toprule
\textbf{Attack} & \textbf{Example transformation (final answer preserved)} \\
\midrule
Honest chain & $s_1$: First, $a = 2 \cdot 3 = 6$. \quad $s_2$: Independently, $b = 5 - 1 = 4$. \quad $s_3$: Then $a + b = 10$. \quad $\boxed{10}$ \\
\addlinespace
Step-inflation & $s_1$: First, $a = 2 \cdot 3 = 6$. \quad \emph{[Let me restate: $a$ is now defined.]} \quad $s_2$: Independently, $b = 5 - 1 = 4$. \quad \emph{[To summarise: both $a$ and $b$ are computed.]} \quad $s_3$: Then $a + b = 10$. \quad $\boxed{10}$ \\
\addlinespace
Position-sensitivity & $s_2$: Independently, $b = 5 - 1 = 4$. \quad $s_1$: First, $a = 2 \cdot 3 = 6$. \quad $s_3$: Then $a + b = 10$. \quad $\boxed{10}$ \\
\addlinespace
Confidence-injection & $s_1$: \emph{Clearly,} $a = 2 \cdot 3 = 6$. \quad $s_2$: \emph{By definition,} $b = 5 - 1 = 4$. \quad $s_3$: \emph{It is immediate that} $a + b = 10$. \quad $\boxed{10}$ \\
\bottomrule
\end{tabular}
\end{table*}

Step-inflation attacks insert template restatement steps drawn from a small fixed pool, including ``Let me restate what we have so far: [paraphrase].'', ``To summarise the progress so far, [paraphrase].'', and ``Before continuing, I'll verify the current state: [paraphrase].'' Confidence-injection attacks prepend each step with a phrase drawn uniformly from the set $\{$``By a well-known result,'', ``It is immediate that,'', ``Clearly,'', ``Without loss of generality,'', ``By definition,'', ``Trivially,'', ``Obviously,''$\}$. Position-sensitivity attacks use three permutation strategies: reverse order when permitted by the dependency graph, front-loading the highest-PRM-reward eligible step, and back-loading the highest-PRM-reward eligible step. Dependency graphs are constructed using strict variable-and-definition tracking, so that a step defining a variable $x$ cannot be moved after a step consuming $x$. This dependency constraint is what produces the higher invalid-attack rate for position-sensitivity reported in Table~\ref{tab:human_val}: chains with tight dependency structures admit fewer valid permutations.

\section{Supplementary Figures}
\label{app:supplementary_figures}

This section provides additional visual analyses supporting the EST-PRM evaluation. The figures summarize the evaluation design, attack effects, influence rates, human validation results, model-specific robustness patterns, mitigation behavior, reproducibility components, and dataset filtering statistics.

Figure~\ref{fig:prm_eval_comparison} provides a high-level comparison between EST-PRM and prior reward-model evaluation benchmarks, emphasizing the role of PRM-specific and dense adversarial evaluation.

\begin{figure}[t]
\centering
\includegraphics[width=\columnwidth]{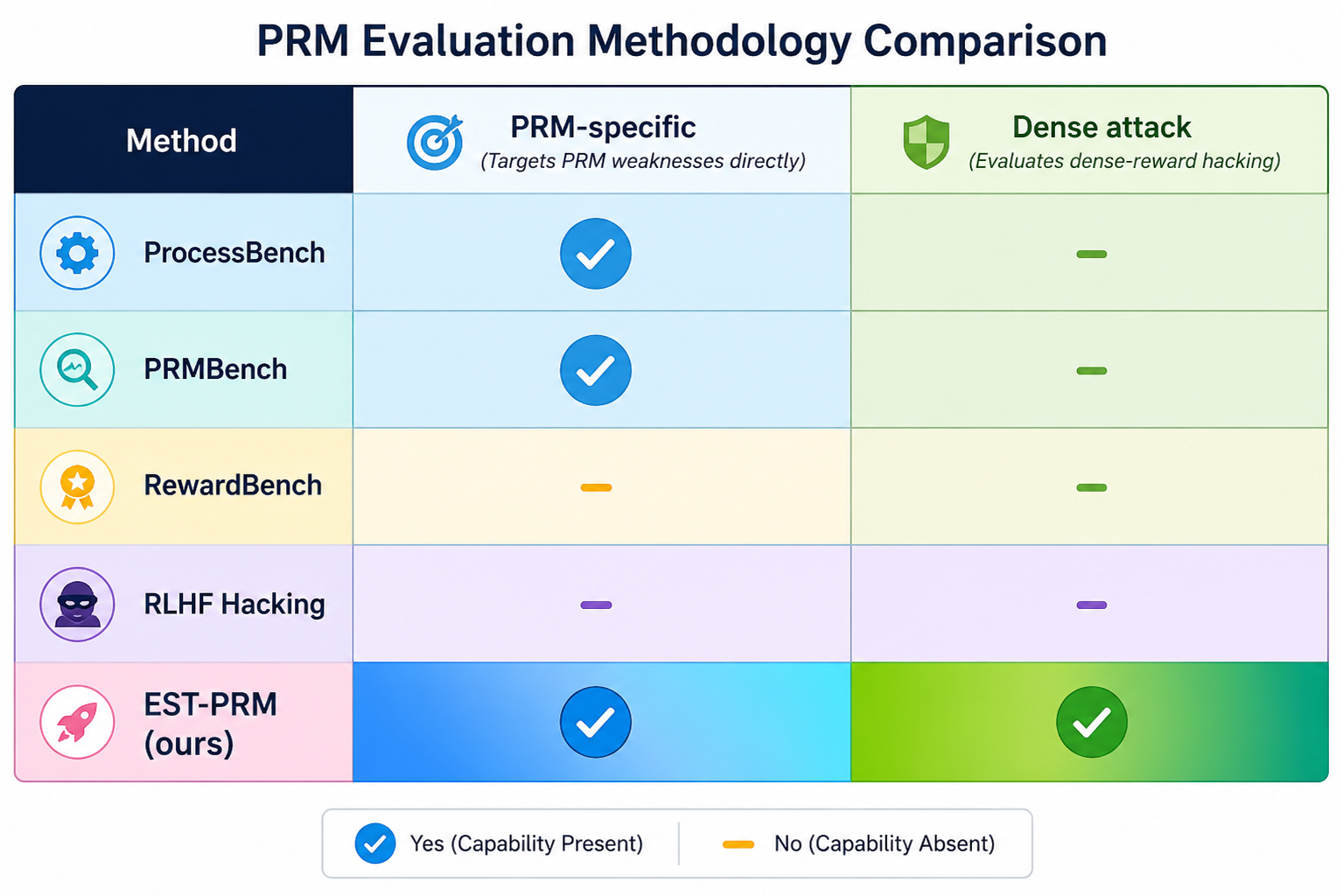}
\caption{Comparison of PRM evaluation methodologies across existing benchmarks and the proposed EST-PRM framework. The figure illustrates whether each method is PRM-specific and whether it incorporates dense adversarial attacks. Prior work, including ProcessBench, PRMBench, RewardBench, and RLHF Hacking, primarily evaluates models under natural or outcome-level errors with limited consideration of dense adversarial settings. In contrast, EST-PRM explicitly integrates both PRM-specific evaluation and dense adversarial perturbations (Step-inflation, Position, and Confidence attacks), enabling a more comprehensive assessment of reward model robustness under realistic and adversarial conditions.}
\label{fig:prm_eval_comparison}
\end{figure}

Figure~\ref{fig:signed_attack_effects} further examines how different attack types change model scores and influence estimates for Math-Shepherd-7B.

\begin{figure}[t]
\centering
\includegraphics[width=\columnwidth]{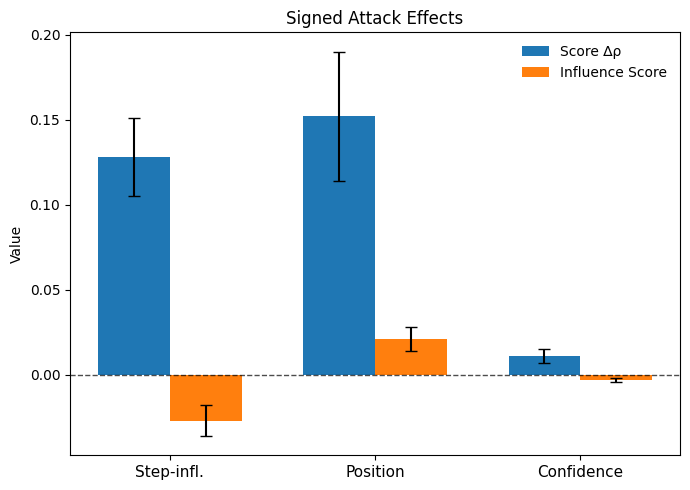}
\caption{Comparison of signed attack effects on the Math-Shepherd-7B model under Step-inflation, Position, and Confidence perturbations. The figure reports $\Delta\rho$ (Score) and Influence Score with 95\% bootstrap confidence intervals. Step-inflation and Position attacks produce larger, mixed-sign effects across both metrics, indicating unstable but substantial perturbations, while Confidence-based attacks remain consistently near zero, reflecting weak and largely ineffective influence on model behavior.}
\label{fig:signed_attack_effects}
\end{figure}

To complement the signed-effect analysis, Figure~\ref{fig:influence_rate} reports the influence rate induced by each attack strategy.

\begin{figure}[t]
\centering
\includegraphics[width=\columnwidth]{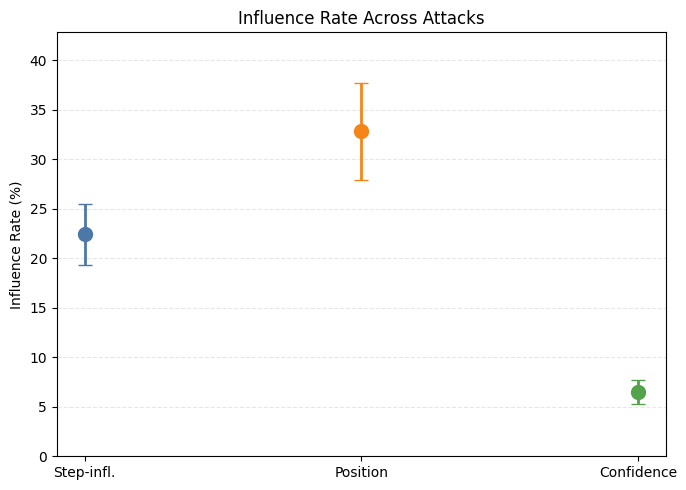}
\caption{Influence rates under different adversarial strategies (Step-inflation, Position, and Confidence) for the Math-Shepherd-7B model. Each point shows the mean influence rate, with vertical error bars representing 95\% bootstrap confidence intervals. Results indicate that Step-inflation and Position attacks consistently produce higher influence rates than Confidence-based attacks, suggesting greater model susceptibility to structural and positional perturbations compared to confidence perturbations.}
\label{fig:influence_rate}
\end{figure}

Figure~\ref{fig:human_validation} evaluates whether the proposed attacks preserve the original answer labels and reasoning consistency under human validation.

\begin{figure}[t]
\centering
\includegraphics[width=\columnwidth]{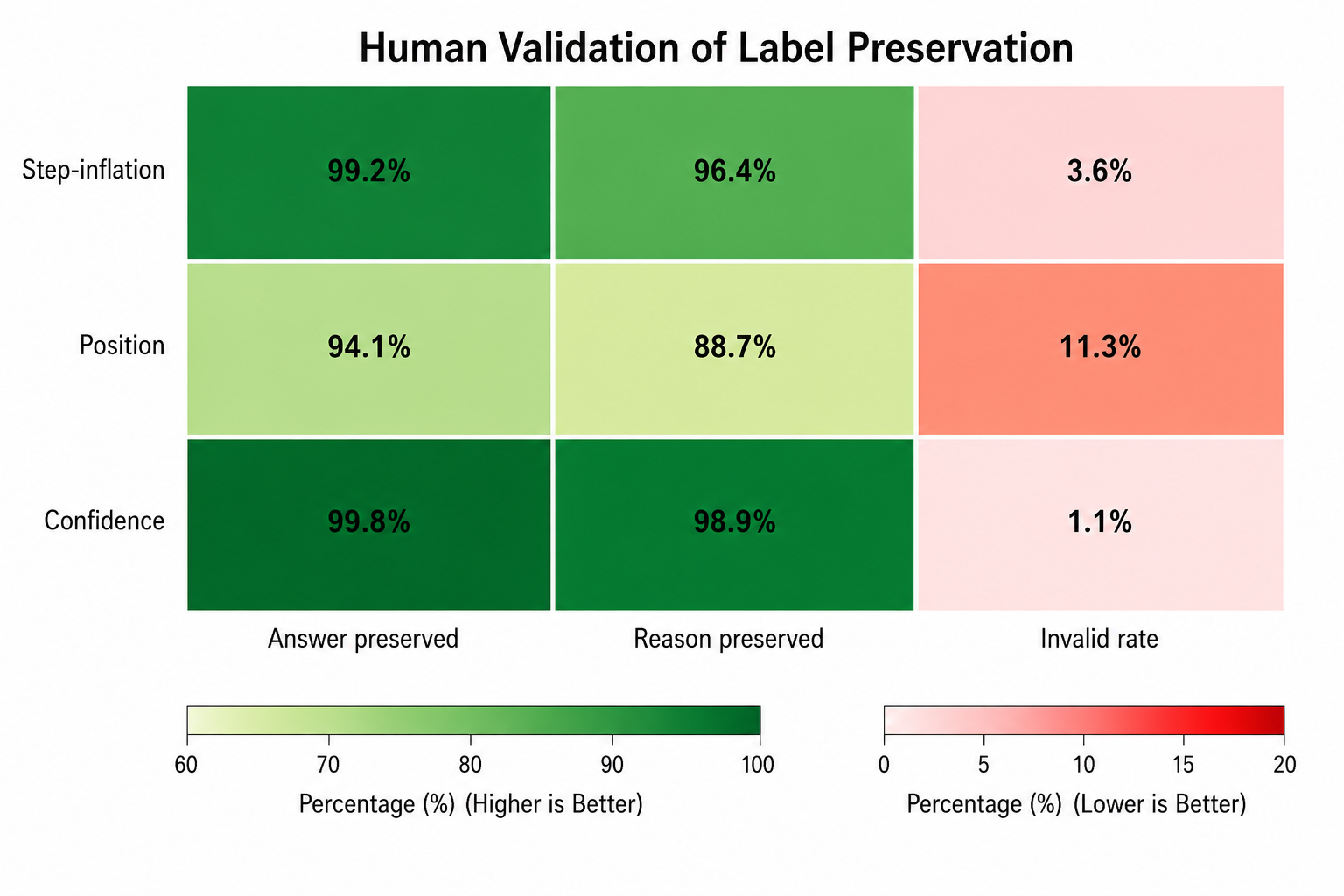}
\caption{Human validation of label preservation under different attack strategies. The heatmap shows the percentage of preserved answers and reasoning consistency alongside invalid generation rates across Step-inflation, Position, and Confidence attacks. Green intensity indicates higher preservation quality, while red intensity highlights higher failure rates, enabling a clear comparison of robustness across attack types on a 500-chain evaluation subset.}
\label{fig:human_validation}
\end{figure}

Figure~\ref{fig:correlation_collapse_models} extends the robustness analysis across multiple models and attack categories, showing model-specific correlation collapse patterns.

\begin{figure}[t]
\centering
\includegraphics[width=\columnwidth]{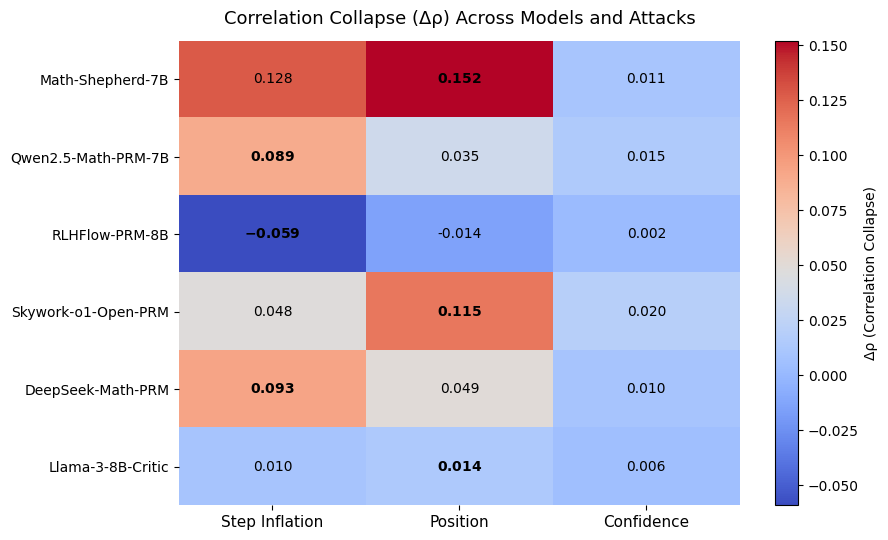}
\caption{Heatmap showing correlation collapse ($\Delta\rho$) for six models under three adversarial settings: Step Inflation, Position, and Confidence attacks. Each cell represents the change in correlation relative to the baseline, with color intensity encoding both magnitude and direction of $\Delta\rho$. Per-model dominant vulnerability is highlighted in bold, corresponding to the maximum absolute $\Delta\rho$ within each row. The visualization highlights model-specific sensitivity patterns, demonstrating that correlation collapse is not uniform across architectures, with Step Inflation and Position attacks generally inducing stronger disruptions than Confidence-based attacks.}
\label{fig:correlation_collapse_models}
\end{figure}

Figure~\ref{fig:score_inflation} presents complementary evidence on score inflation, identifying which attack type most strongly increases reward-model scores for each PRM.

\begin{figure}[t]
\centering
\includegraphics[width=\columnwidth]{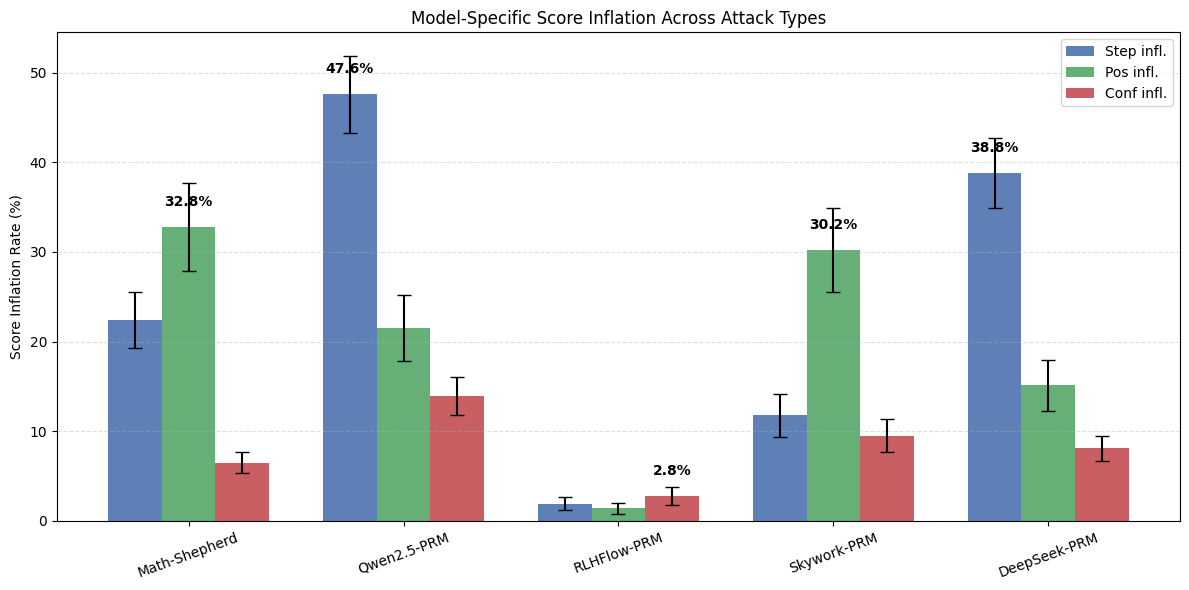}
\caption{Score-inflation rates (relative score increase $>$ 10\%) across five process reward models under Step, Position, and Confidence attacks. Bars show mean inflation percentages with error bars indicating uncertainty (± values). Per-row maxima are highlighted in bold to indicate the most vulnerable attack type for each model, revealing distinct model-specific exploitation patterns.}
\label{fig:score_inflation}
\end{figure}

Figure~\ref{fig:mitigation_behavior} summarizes the mitigation trade-off on Math-Shepherd by comparing detection performance across different mitigation strategies.

\begin{figure}[t]
\centering
\includegraphics[width=\columnwidth]{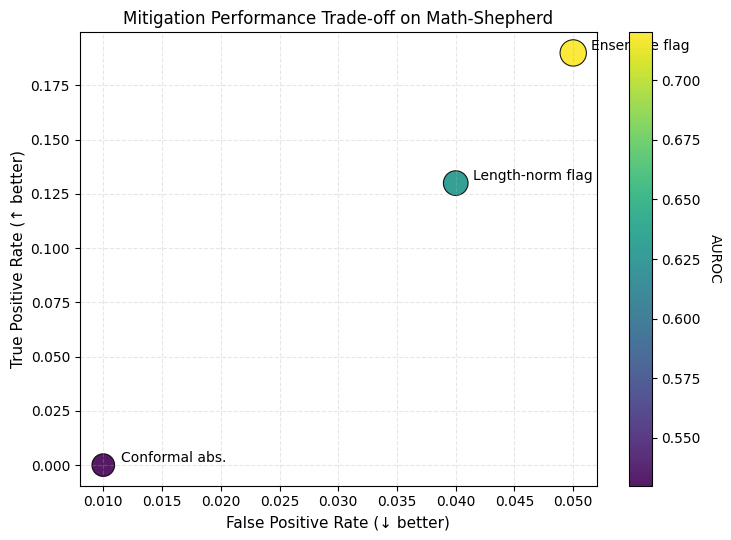}
\caption{Mitigation behavior on Math-Shepherd. The figure illustrates the detection performance of different mitigation strategies in terms of true positive rate (TPR) versus false positive rate (FPR), highlighting their trade-off under adversarial conditions. Marker size and color encode AUROC, which together provide a unified representation of overall discriminative effectiveness across methods.}
\label{fig:mitigation_behavior}
\end{figure}

Figure~\ref{fig:artifact_release} summarizes the EST-PRM artifact release components, including the major elements needed for reproducing the evaluation pipeline.
\begin{figure}[t]
\centering
\includegraphics[width=\columnwidth]{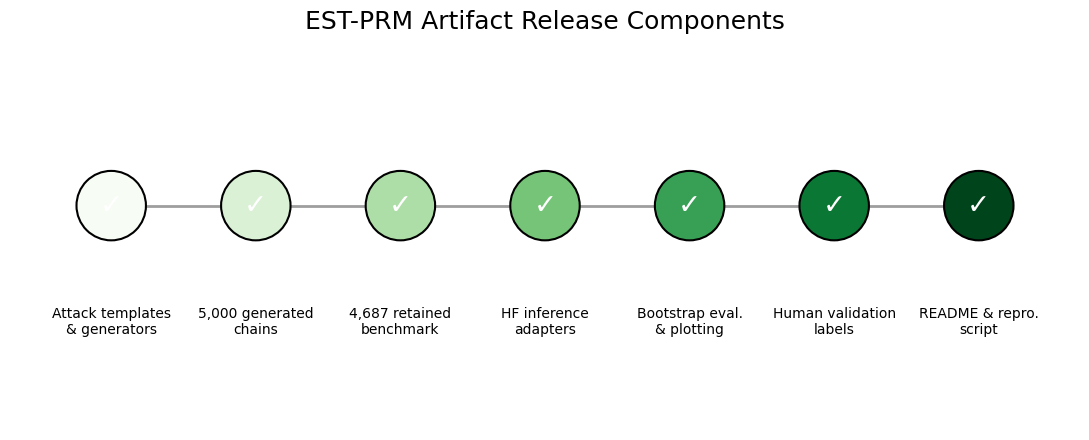}
\caption{Overview of the EST-PRM evaluation pipeline and reproducibility components used in the experiments.}
\label{fig:artifact_release}
\end{figure}

Figure~\ref{fig:dataset_filtering} shows how dataset filtering affects the final evaluation set across benchmarks.

\begin{figure*}[t]
\centering
\includegraphics[width=\columnwidth]{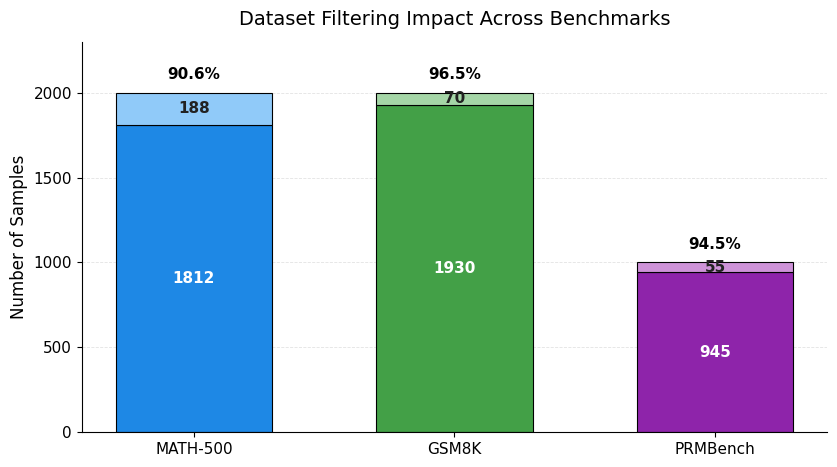}
\caption{The figure illustrates the effect of filtering invalid or non-parsable reasoning chains across MATH-500, GSM8K, and PRMBench, showing the reduction from initial samples to post-filter retained samples, along with the corresponding number of filtered-out instances and retention rates; the retained subset forms the final evaluation set used in the main experiments (4,687 chains in total). Filtered-out samples correspond to invalid or non-parsable chains.}
\label{fig:dataset_filtering}
\end{figure*}

Finally, Figure~\ref{fig:correlation_collapse_datasets} reports correlation collapse across datasets and PRMs under their dominant attack settings.
\begin{figure*}[t]
\centering
\includegraphics[width=\columnwidth]{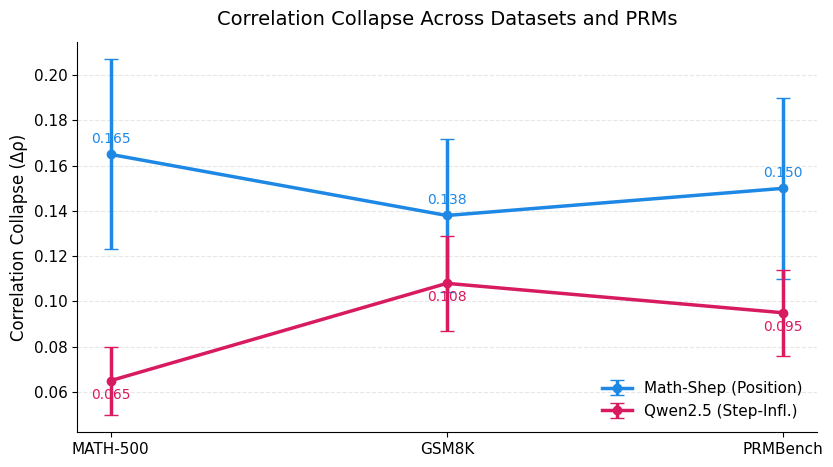}
\caption{The figure presents the correlation collapse ($\Delta\rho$) across MATH-500, GSM8K, and PRMBench for two PRMs under their dominant attack settings, namely Math-Shep (Position attack) and Qwen2.5 (Step-Inflation attack). Mean $\Delta\rho$ values are reported with standard deviation (±) as error bars, highlighting variability across bootstrap runs and demonstrating consistent differences in robustness patterns across datasets and models.}
\label{fig:correlation_collapse_datasets}
\end{figure*}
\end{document}